\newtheorem{theorem}{Theorem}[section]
\newtheorem{lemma}[theorem]{Lemma}
\newcommand{\secref}[1]{Section \ref{#1}}
\renewcommand{\eqref}[1]{Eq.~(\ref{#1})}
\newcommand{\teacher}{f_{\theta}}
\newcommand{\student}{f_{\phi}}
\newcommand{\dteacher}{\frac{\partial f_{\theta}}{\partial \theta}}
\title{Teacher-Student Consistency \\ For Multi-Source Domain Adaptation}
\author{
    Ohad Amosy,\textsuperscript{\rm 1} 
    Gal Chechik\textsuperscript{\rm 1}\textsuperscript{\rm 2}
}
\begin{document}
\maketitle

\begin{abstract}

In \textit{Multi-Source Domain Adaptation} (MSDA), models are trained on samples from multiple source domains and used for inference on a different, target, domain. 
Mainstream domain adaptation approaches learn a joint representation of source and target domains. Unfortunately, a joint representation may emphasize features that are useful for the source domains but hurt inference on target (\textit{negative transfer}), or remove essential information about the target domain (\textit{knowledge fading}).

We propose \textbf{Mu}lti-source \textbf{S}tudent \textbf{T}eacher (MUST), a novel procedure designed to alleviate these issues. The key idea has two steps: First, we train a teacher network on source labels and infer pseudo labels on the target. Then, we train a student network using the pseudo labels and regularized the teacher to fit the student predictions. This regularization helps the teacher predictions on the target data remain consistent between epochs. 
Evaluations of MUST on three MSDA benchmarks: digits, text sentiment analysis, and visual-object recognition show that MUST outperforms current SoTA, sometimes by a very large margin. We further analyze the solutions and the dynamics of the optimization showing that the learned models follow the target distribution density, implicitly using it as information within the unlabeled target data. 
\end{abstract}

\section{Introduction}
Multi-source domain adaptation is a fundamental problem in AI. In many scenarios, labeled samples are collected from multiple distinct \textit{source} distributions, and we wish to use them to learn a model that is then applied to another, \textit{target}, distribution. 
In many cases, unlabeled target samples are available during training.
For example, images may be taken under several known weather or lighting conditions, medical data may be collected using different versions of a sensor and product reviews may be collected for  different products. In all these cases, we wish to learn from all source domains and infer on unlabeled data from a new distribution.  

\begin{figure}[!h]
    \begin{center}
        \includegraphics[width=1\linewidth]{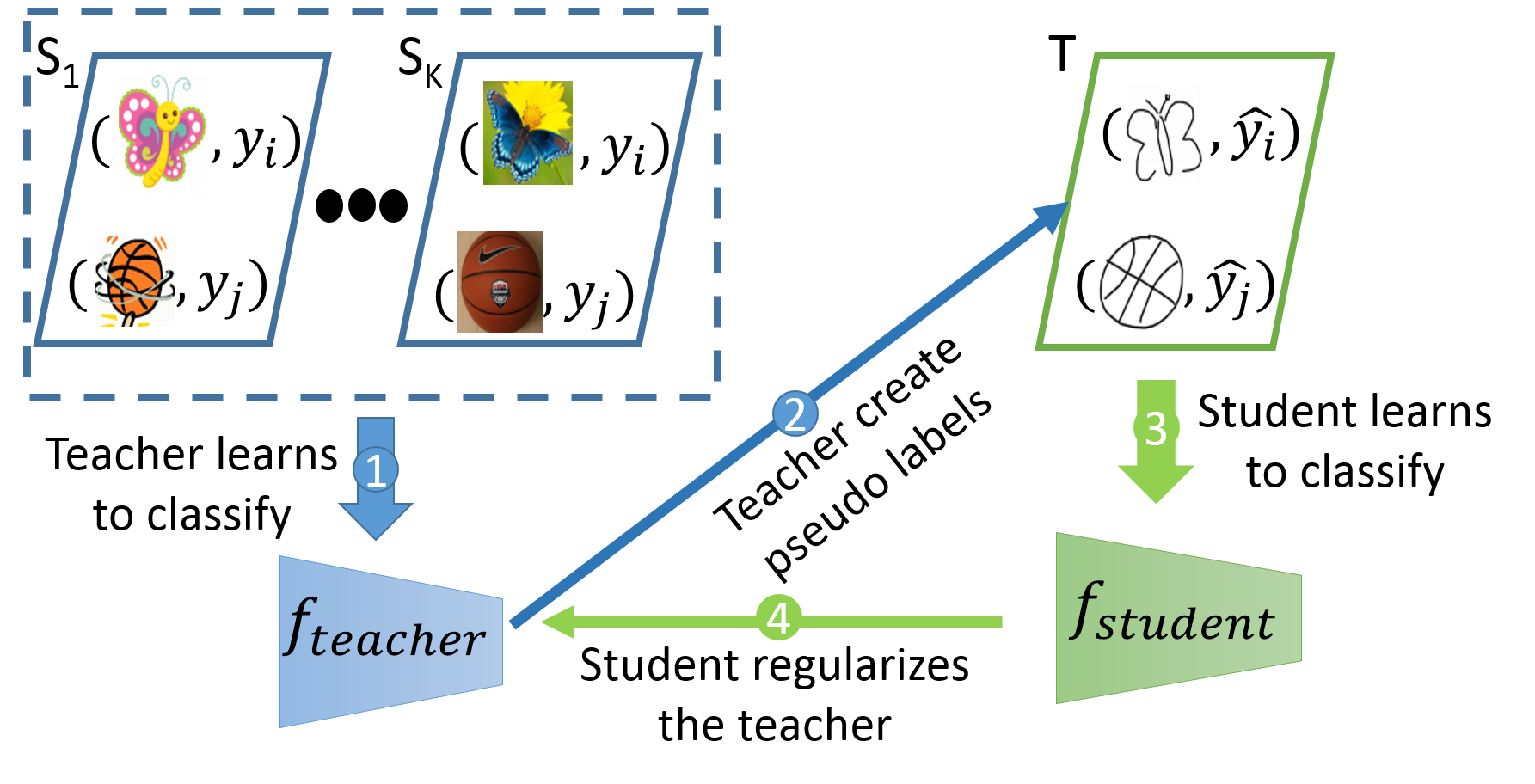}
    \end{center}
    \caption{\textbf{MUST Training}. (1) The source domains are used to train a teacher network. (2) The teacher predicts the labels of the target domain. (3) The target domain with the teacher predictions are used to train a student network. (4) The student regularizes the teacher by ensuring its consistency with the student on the target.
    This example demonstrate the separation between the source domains and the student, in a way that prevents the student from learning features that do not exist in the target. For example, when source images contains color and the target does not, the teacher may use color feature for classification, but the student will not be exposed to those features.
   }
    \label{fig:figure1}
\end{figure}

For single-source unsupervised DA, mainstream approaches learn a joint representation of the source and the target domains. This can be achieved by aligning the source and target distributions to minimize their divergence \cite{wilson2018survey, kouw2019review}, by normalizing source and target domain data \cite{li2018adaptive-adaBN,cariucci2017autodial}, or by mapping between the source and target domains with GANs \cite{wilson2018survey}. Once this representation is learned, one can train a model using source domain labels and use it to in the target domain.

Unfortunately, learning a joint representation of source and target domains has two significant drawbacks, which become severe when learning from multiple sources. First, \textit{negative transfer}: The joint representation may emphasize features that are useful for the source domains but hurt inference on target data. Second, \textit{knowledge fading}: without labels available for the target, learning removes essential information about the target domain. This problem is more severe in the multi-source domains, because aligning more domains may remove more target-relevant features.

We propose to address these two issues by training two separate networks, a \textit{teacher network} trained on all source domains and a \textit{student network} trained on the target domain. 
Because there are no target domain labels, the student uses pseudo labels, produced by the teacher.
This way, the student does not learn source features and we prevent negative transfer. In addition, the student learns from the raw target data and we prevent knowledge fading.

The training procedure is as follows: First, a teacher network is trained on the source domains, and infers soft pseudo labels (posteriors) for target domain samples. Next, a student network is trained on target samples using the teacher predictions as labels.  This way, the information from the source domains is projected to the target domain. 

Training on the target domain using pseudo labels from the teacher is a natural way to train the student network. Unfortunately, those pseudo labels tend to change rapidly from one epoch to the next, leading to poor performance of the student network. This instability occurs because  the teacher network us trained on the source domains and is blind to the prediction it makes on the target domain.
Small changes in the teacher model may cause large changes of the pseudo labels on the target, because the teacher model is tested on samples that are outside of its training distribution. To address this issue,  
%
we constrain the teacher network to make predictions that are similar to the student predictions. This operates as a regularizer, that reduces prediction instability across epochs. Figure 1 illustrates this learning setup, and the architecture which we name \textbf{MUST}, for \textit{MUlti-Source Student-Teacher} training .
We analyze this process theoretically and empirically  section \secref{sec:MUST}.


Our novel contributions are as follows: (1) We describe a new training procedure for multi-source UDA. (2) We analyze the dynamics, showing empirically and analytically that it tends to converge to the low-density areas of the target distribution. (3) We evaluate MUST using three MSDA classification benchmarks:  Digits (MNIST-like), sentiment analysis (Amazon data) and visual object recognition (DomainNet). MUST reaches new SOTA on all three benchmarks. The reduction in error compared with an estimate of the upper bound obtained with labeled target data is dramatic, reducing average error rate by 76\% on the digits dataset. 


\section{Related work}
\label{sec:related_work}
Single-source domain adaptation (SSDA) has been extensively studied. See \cite{wilson2018survey,kouw2019review} for recent literature surveys.

\textbf{Multi-source DA:}
Compared to the vast amount of research done on single-source UDA, multi-source DA (MSDA) is still less explored. The theory of MSDA was studied by \citet{ben2010theory}. They suggested using divergence between source and target domains to find a theoretical bound for generalization error. \citet{mansour2009domain} introduces a new divergence measure and \citet{mansour2009domain-linear} suggested presenting the target hypotheses as a weighted combination of source hypotheses. Inspired by DANN \cite{ganin2016domain}, an SSDA approach, \citet{zhao2018adversarial-MSDA} proposed multi-source domain adversarial networks (MDAN), an adversarial loss to find a representation indistinguishable between all source domains and the target domain and, at the same time, informative enough for the given task. \citet{li2018adaptive-adaBN} uses different batch norm per domain (adaBN), letting the model learn different batch statistics in different domains. Motivated by \cite{mansour2009domain-linear}, Deep Cocktail Networks (DCTN) \citet{xu2018deep-cocktail} used adversarial learning to minimize the discrepancy between the target and each of the multiple source domains. Multi-Domain Matching Network (MDMN) \cite{li2018extracting-MDMN} increases domain similarities not only between the source and target domains but also within the source domain themselves based on a Wasserstein-like measure. Moment matching (M3SDA) \cite{peng2019moment} minimizes the first order moment-related distance between all source and target domains. Domain aggregation network (DARN) \cite{wen2019domain-agg} learns to weight the source domains to find the optimal balance between increasing the effective sample size and excluding irrelevant data.

\textbf{Reverse validation:}
Unlike supervised learning, for DA tasks, hyperparameter tuning can not be done using cross validation, since the target data is from a different distribution and has no labels. 

\citet{zhong2010cross-ping-pong} proposed using reverse validation (RV) for hyperparameter tuning with no target labeled data. RV is estimated by first splitting the source (labeled) and target (unlabeled) data into training and validation sets. The training set is used to train a classifier with any UDA method, and infer pseudo-labels over the target validation set. Those pseudo-labels (and some labeled target data if exist) are used then to train another classifier with the same UDA method, but with the pseudo-labeled target data as the source and the source data, without the labels, as the target. This reverse classifier is evaluated on the source validation data and its loss is the RV. The parameters that gain the lowest RV are selected.

\textbf{Student-teacher approaches:}
Several authors proposed using a teacher-student architecture for DA. 

\cite{bousmalis2017unsupervised}, \cite{meng2018adversarial-ts-speech} and \cite{meng2019domain-ts-speech} perform DA for speech recognition tasks. \citet{bousmalis2017unsupervised} uses pairs of samples from source and target domains, which are frame-by-frame synchronized. The teacher is a trained model on the source domain, that remains constant. In the training process, the teacher and student predict the labels to a pair of related source and target samples. The student minimizes the KL divergence between the outputs. \citet{meng2018adversarial-ts-speech} uses the same idea as \cite{bousmalis2017unsupervised}, but add to the student an adversarial objective that encourages it to learn condition-invariant features. \citet{meng2019domain-ts-speech} also followed \cite{bousmalis2017unsupervised}, but replaces the teacher and student models with an attention based encoder-decoder.

\citet{ruder2017knowledge-st-da} used multiple teachers, one per source domain. The teachers are trained in a supervised manner and remain constant through the adaptation process. The student model is then trained to imitate a weighted sum of the teacher’s predictions.

In \cite{french2017self} the student is trained on the source domain, while the weights of the teacher network are an exponential moving average of those of the student. Target data is passed through both teacher and student with different augmentations and penalizes the mean squared difference between their predictions. 

MUST differs from these approaches in several components: (1) MUST does not assume source and target synchronization. (2) MUST trains the teacher and the student jointly. (3) In MUST the teacher and the student are different and independent networks.

\section{Our approach}
\label{sec:MUST}
Our approach is designed to reduce negative transfer while maintaining the original representation of the target domain. 
The key idea is to train a model only on raw target samples. This way, the model is not exposed directly to features that are not target related. In addition, the model trains on the original target samples, so no information fading accrue.
We use two networks to learn separately from the source domains and target domain. We train a "\textit{Teacher}" network on labeled data from the source domains. The teacher transfer its knowledge to a "\textit{Student}" network by training the student network on the target data, using the teacher predictions on that data as labels. 

However, teacher network predictions on the target are noisy throughout the training process. This noise interferes with the student convergence. To solve this issue, we regularize the teacher to fit also to the student predictions on the target. Since the student is trained on the teacher previous prediction, this regularization helps the teacher produce more consistent predictions on the target.

We now formally define the learning setup and our approach. 
Let $\{S^{k}\}_{i=1}^{K}$ be a collection of source domains, where the $k^{th}$ domain has $N_k$ labeled samples $\{(x_i^k,y_i^k)\}_{i=1}^{N_k}$. Similarly, $T$ is a target domain, with $N_T$ unlabeled samples $\{z_i\}_{i=1}^{N_T}$. 
The multi-source DA problem aims to find a hypothesis $f$, which minimizes the test target error $\epsilon(f)=E_{z\sim T}[p_T(y|z)-p(y|z,f)]$, where $p_T(y|z)$ denotes the conditional distribution of the target domain and $p(y|z,f)$ denotes the conditional distribution of the predicted label.

Our algorithm trains two networks. The "\textit{Teacher}" network trained on labeled source samples and the "\textit{Student}" network trained on target samples with the teacher predictions. These networks are trained jointly, using two losses:
\textbf{(1) Teacher learns to classify}. Train a teacher classifier $f_{\theta}$ using samples from the source domains, by minimizing

\begin{equation}
\label{eq:l1}
    L_{teacher-clf}=\sum_{k=1}^{K} \frac{1}{N_k}\sum_{i=1}^{N_k} l_1(f_{\theta}(x_i^k),y_i^k).
\end{equation}

\textbf{(2) The teacher trains the student}: Use $f_{\theta}$ to give soft labels to the target domain and use them to train the student $f_{\phi}$. For that we minimize
\begin{equation}
\label{eq:l2}
    L_{student}= \frac{1}{N_T}\sum_{i=1}^{N_T} l_2(\student(z_i),\teacher(z_i)).
\end{equation}
$L_{student}$ is a function of both the student network $f_{\phi}$ and the teacher network $f_{\theta}$. We used its derivatives w.r.t. the student parameter $\phi$ to train the student network. 

\textbf{Combining the losses.}
Finally, the student loss is treated as a regularizer, and the loss for training the teacher network becomes
\begin{equation}
\label{eq:l3}
    L_{teacher} = L_{teacher-clf}+\lambda \cdot L_{student},
\end{equation}
where $\lambda$ is a hyper parameter selected using reverse validation. $l_1$ and $l_2$ are loss functions. The training process is summarized in algorithm \ref{alg:training}.

\begin{algorithm}[!h]
  \caption{MUST training procedure}
  \label{alg:training}  
    \begin{algorithmic}[1]
      \STATE {\bfseries Input:} Source domains samples $\{\{(x_i^k,y_i^k)\}_{i=1}^{N_k}\}_{k=1}^{K}$, target domain samples $\{z_i\}_{i=1}^{N_T}$ and a hyperparameter $\lambda$
      
      \FOR{$t=1..steps$}
      \STATE $src = random(K)$   //Choose random source
      \STATE $X_{src},Y_{src}$=sample-batch$(\{(x_i^{src},y_i^{src}\})_{i=1}^{N_{src}})$
      \STATE Calculate $L_{teacher-clf}$ using \eqref{eq:l1}
      \STATE $X_{tgt} =$ sample-batch$(\{z_i\}_{i=1}^{N_T}$)
      \STATE Calculate $L_{student}$ using \eqref{eq:l2}
      \STATE Update $\phi$ to minimize $L_{student}$
      \STATE Calculate $L_{teacher}$ using \eqref{eq:l3}
      \STATE Update $\theta$ to minimize $L_{teacher}$ 
      \ENDFOR
    \end{algorithmic}
\end{algorithm}

We now show that the gradients for the teacher model became small in one of two cases: when the student agree with the teacher on the target or when moving the decision boundaries away from the target samples.
Let $f_{\theta}(x)$ be the teacher network and $f_{\phi}(x)$ be the student. When $l_1$ is the cross entropy loss and $l_2$ is the Euclidean $L_2$, the teacher loss can be written as 
\begin{multline}
    L_{teacher}=E_{(x,y)\sim S}[y log\teacher(x)+(1-y) log(1-\teacher(x))] \\
    + \lambda E_{z\sim T}[l_2(\teacher(z),\student(z))].
\end{multline}
%
The teacher gradient decent update rule is: 
\begin{equation}
    \theta ^{t+1}=\theta ^{t}+\eta \frac{\partial L_{teacher}}{\partial \theta},
\end{equation}
where $\eta$ is the learning rate. 

\begin{lemma}
\label{lemma:margin}
Let the teacher network $\teacher$ be a binary classifier parametrized by $\theta$, whose last layer is a sigmoid $\teacher=\sigma(g_\theta(x))$. Let  $\student$ be the student network. If $\forall{}z \in T: |\frac{\partial g_\theta(z)}{\partial \theta}|\leq A$ and $|g_\theta(z)| \geq \rho \geq 0$ than using $L_2$ for the student loss bounds the gradient:
$\frac{\partial L_{teacher}}{\partial \theta} \leq$ \\ 
$$E_{(x,y)\sim S}[y \frac{\dteacher(x)}{\teacher(x)}] + 2\lambda E_{z\sim T}[(\teacher(z)-\student(z))]\frac{A}{e^{\rho}}. $$
\end{lemma}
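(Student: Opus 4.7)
The plan is to compute $\partial L_{teacher}/\partial \theta$ term by term and then to control the target-side contribution via the margin hypothesis on $g_\theta$. First I would apply linearity of differentiation to split the derivative into the source cross-entropy piece $\partial L_{teacher\text{-}clf}/\partial \theta$ and the regulariser piece $\lambda\, \partial L_{student}/\partial \theta$. Differentiating $y\log \teacher(x)+(1-y)\log(1-\teacher(x))$ with respect to $\theta$ via the chain rule yields $y\,\dteacher(x)/\teacher(x)$ together with a $(1-y)$ term whose sign the lemma absorbs into the inequality, reproducing the first summand of the stated upper bound.

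The bulk of the work is bounding the target piece. With $l_2$ taken as the squared Euclidean distance, the chain rule gives $\partial l_2(\teacher(z),\student(z))/\partial \theta = 2(\teacher(z)-\student(z))\, \partial \teacher(z)/\partial \theta$; and since $\teacher=\sigma(g_\theta)$, a second application of the chain rule yields $\partial \teacher(z)/\partial \theta = \sigma'(g_\theta(z))\,\partial g_\theta(z)/\partial \theta$. The analytic heart of the proof is bounding $\sigma'$: using the identity $\sigma'(g)=1/(2+e^{g}+e^{-g})$ together with the elementary inequality $e^{g}+e^{-g}\ge e^{|g|}$, the margin hypothesis $|g_\theta(z)|\ge \rho$ gives $\sigma'(g_\theta(z))\le 1/e^{\rho}$. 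Combining this with $|\partial g_\theta(z)/\partial \theta|\le A$ produces $|\partial \teacher(z)/\partial \theta|\le A/e^{\rho}$, and taking expectation over $z\sim T$ delivers the second summand $2\lambda\, E_{z\sim T}[(\teacher(z)-\student(z))]\cdot A/e^{\rho}$ of the claim. Summing the two contributions closes the lemma.

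The main obstacle is more interpretive than technical: the lemma writes a scalar $\le$ for what is a vector-valued gradient, while the hypotheses $|\partial g_\theta/\partial \theta|\le A$ and $|g_\theta(z)|\ge \rho$ are naturally read coordinate-wise. I would therefore either interpret the conclusion coordinate-wise, or replace each derivative by its absolute value (pulled inside the expectation) and read the bound as a bound on the norm of $\partial L_{teacher}/\partial \theta$. Once that reading is fixed, the chain-rule calculations are routine; the only genuinely analytic step is the one-line estimate $\sigma'(g)\le e^{-\rho}$ under the margin, which is precisely what encodes the qualitative message that the decision boundary is pushed away from the target samples.
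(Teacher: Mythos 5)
Your proof follows essentially the same route as the paper's: split the gradient by linearity, apply the chain rule to the cross-entropy and $L_2$ terms, and bound $\frac{\partial \teacher}{\partial \theta}$ on target samples by $A/e^{\rho}$ via the sigmoid derivative. In fact your estimate $\sigma'(g)=1/(2+e^{g}+e^{-g})\le e^{-|g|}\le e^{-\rho}$ is slightly more careful than the paper's chain $1/(2+e^{-g_\theta(z)}+e^{g_\theta(z)})\le e^{-g_\theta(z)}\le e^{-\rho}$, which tacitly requires $g_\theta(z)\ge\rho$ rather than $|g_\theta(z)|\ge\rho$; likewise, the interpretive caveats you raise (a scalar inequality asserted for a vector-valued gradient, and the $(1-y)$ term that appears in the derivation but is dropped from the lemma statement) are genuine loose ends in the paper's own argument rather than gaps in yours.
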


\begin{proof}

For any $\theta$, the loss derivative is:
\begin{multline}
\label{eq:teacer-loss-step}
    \frac{\partial L_{teacher}}{\partial \theta} = 
    E_{(x,y)\sim S}[y \frac{\dteacher(x)}{\teacher(x)}+(1-y) \frac{-\dteacher(x)}{1-\teacher(x)}] + \\
    \lambda E_{z\sim T}[2(\teacher(z)-\student(z))\dteacher(z)].
\end{multline}
Now, using $\forall{}z \in T: |\frac{\partial g_\theta(z)}{\partial \theta}|\leq A$ and $|g_\theta(z)| \geq \rho \geq 0$ gives:
\begin{equation}
\label{eq:claim-A}
    \frac{\partial f_\theta(z)}{\partial \theta} = 
    \frac{\frac{\partial g_\theta(z)}{\partial \theta}}{2+e^{-g_\theta(z)}+e^{g_\theta(z)}}
    \leq \frac{\frac{\partial g_\theta(z)}{\partial \theta}}{e^{g_\theta(z)}}
    \leq \frac{A}{e^{\rho}}.
\end{equation}
Plugging (\ref{eq:claim-A}) into (\ref{eq:teacer-loss-step}) gives us 
$\frac{\partial L_{teacher}}{\partial \theta} \leq E_{(x,y)\sim S}[y \frac{\dteacher(x)}{\teacher(x)}$  
$$+(1-y) \frac{-\dteacher(x)}{1-\teacher(x)}] + 2\lambda E_{z\sim T}[(\teacher(z)-\student(z))\frac{A}{e^{\rho}}]. $$
Since $\frac{A}{e^{\rho}}$ is a constant it follows that: 

\begin{multline}
    \frac{\partial L_{teacher}}{\partial \theta} \leq E_{(x,y)\sim S}[y \frac{\dteacher(x)}{\teacher(x)}+(1-y) \frac{-\dteacher(x)}{1-\teacher(x)}] \\
    + 2\lambda E_{z\sim T}[(\teacher(z)-\student(z))]\frac{A}{e^{\rho}}.
\end{multline}
\end{proof}
This result gives an upper bound for the changes in $\theta$ between epochs.
The teacher converges when $\frac{\partial L_{teacher}}{\partial \theta}$ vanishes. If the teacher is in a local minimum with respect to the source data $E_{(x,y)\sim S}[y \frac{\dteacher(x)}{\teacher(x)}+(1-y) \frac{-\dteacher(x)}{1-\teacher(x)}]$ is 0. $E_{z\sim T}[(\teacher(z)-\student(z))]\frac{A}{e^{\rho}}$ decreases to zero by fitting the student and the teacher perfectly on the target data, or by increasing $\rho$. $|g_\theta(z)|$ is the distance from the decision boundary, so increasing $\rho$ is achieve by moving the decision boundaries away from the target samples. We demonstrated this phenomenon in section \ref{sec:experiments}.

\section{Experimental evaluation}
\label{sec:experiments}
We evaluate MUST using three real-world datasets. First, in the task of sentiment analysis we used the Amazon product reviews benchmark \cite{blitzer2006domain-amazon-dataset} and in the task of image classification we used the digit recognition datasets and DomainNet dataset \cite{peng2019moment}. Code and trained models will be made available at https://github.com/amosy3/MUST.

\textbf{Implementation details:}
We use cross entropy as the loss function of Eq. (\ref{eq:l1}). In (\ref{eq:l2}), loss gradients are calculated on both the estimated probability distribution and the true distribution, which led to a non-stable training when using a cross entropy loss. To overcome this we used $L_1$ loss in (\ref{eq:l2}).

We followed \cite{li2018adaptive-adaBN} and use different batch norm layers for each domain. This way, the model can effectively use the data from all domains using one model. We also followed \cite{french2017self} and use confidence thresholding, to stabilize the training process. For each target sample, if the teacher softmax layer maximum is lower than the confidence threshold (denoted as $C_{th}$), the student will not use that sample for training.

\subsection{Sentiment analysis}
\label{sec:sentiment}
Sentiment analysis aims to decide if the sentiment of a given text is positive or negative. In the DA setup, source and target domains are text from different distribution. For the Amazon reviews dataset different domains are different products.

\textbf{The data:} 
The Amazon reviews dataset contains 27677 reviews on four kinds of products: Books, DVDs, Electronics and Kitchen appliances. Each review is labeled as positive or negative. For fair comparison, we followed the experimental setup of \cite{zhao2018adversarial-MSDA} with the code provided by the authors. Reviews are encoded as 5000 dimensional feature vectors of unigrams and bigrams, with binary labels indicating sentiment. 
We conduct four experiments: we pick one product as the target domain and the rest as source domains for each of them. Each source domain has 2000 labeled examples and the target test set has 3000 to 6000 examples. 

\textbf{Experimental setup:}
As in \cite{zhao2018adversarial-MSDA}, we used the same basic network structure with one 5000 units input layer and three hidden layers with 1000, 500 and 100 units. As MUST uses different batch norm per domain, we added a batch norm layer to the input layer. We used SGD optimizer with learning rate of 0.001, a moment of 0.9 for training and experimented with several hyperparameters configurations ($\lambda=0.25,0.5,1.0$ and $C_{th}=0.6,0.9$), choosing between them using reverse validation.

\textbf{Compared approaches:}
We compare our results with the state-of-the-art results reported in \cite{ruder2017knowledge-st-da} and \cite{wen2019domain-agg}: 
\textbf{(1) mSDA}: \citet{chen2012marginalized} uses stacked denoising autoencoder to learn new higher-level representations.
\textbf{(2) DANN}: \citet{ganin2016domain} uses adversarial loss to create a representation that a domain classifier is unable to classify from which domain the feature representation originated. DANN is a single to single DA method and can not be directly applied in a multiple source domains setting. For the multi-source setup \citet{wen2019domain-agg} merged all the source domains and use it as one large source domain.
\textbf{(3) Knowledge Adaptation (KA)}: \citet{ruder2017knowledge-st-da} uses multiple teachers, one for each domain and another general teacher that trained on all the sources combined, to train a student to imitate the weighted sum of the teacher’s predictions. 
\textbf{(4) Moment matching (M3SDA)}: \citet{peng2019moment} minimizes the first order moment-related distance between all source and target domains.
\textbf{(5) adversarial MSDA (MDAN)}: \citet{zhao2018adversarial-MSDA} optimizes adversarial loss to find a representation that is indistinguishable between the k source domains and the target domain, but informative enough for our desired task to succeed.
\textbf{(6) Multi-Domain Matching Network (MDMN)}:  \citet{li2018extracting-MDMN} increases domain similarities not only between the source and target domains but also within the source
domain themselves based on Wasserstein-like measure.
\textbf{(7) Domain aggregation network (DARN)}: \citet{wen2019domain-agg} learns to weight the source domains to find the optimal balance between increase effective sample size and excluding irrelevant data.
\textbf{(8) Target}: a baseline model that trained on the target labels. 

\begin{table}
    \centering
    \setlength{\tabcolsep}{3pt}
    \scalebox{0.90}{
    \begin{tabular}{ |l|c|c|c|c||c| } 
    \quad Method & Books & DVD & Elect & Kitn & Avg.\\ 
    \hline
    mSDA (Chen 2012) & 77.0 & 78.6 & 82.0 & 84.3 & 80.5\\ 
    DANN (Ganin 2016) & 79.1 & 80.6 & 85.3 & 85.6 & 82.6\\
    KA (Ruder 2017)  & 80.1 & 80.9 & 83.1 & 86.5 & 82.5\\
    $M^3SDA$ (Peng 2019) & 79.4 & 80.8 & 85.5 & 86.5 & 83.1\\
    MDAN (Zhao 2018) & 80.0 & 81.7 & 84.8 & 86.8 & 83.3\\ 
    MDMN (Li 2019) & 80.1 & 81.6 & 85.6 & 87.1 & 83.6\\
    DARN (Wen 2019) & 79.9 & 81.6 & 85.8 & 87.2 & 83.6\\
    MUST (ours) & \textbf{80.5} & \textbf{81.9} & \textbf{86.3} & \textbf{87.9} & \textbf{84.2} \\
    \hline
    \hline
    Labeled target & 84.2 & 83.8 & 86.4 & 88.7 & 85.8\\
    Error reduction (vs Tgt) & 14\% & 14\%  & 83\% & 46\% & 27\%\\
    \hline
    \end{tabular}
    }
    \caption{Sentiment classification accuracy}
    \label{tab:amazon}
\end{table}

\textbf{Results:}
The accuracy of the various methods is summarized in Table \ref{tab:amazon}. Clearly, MUST outperforms all other methods. Note that as \cite{wilson2018survey} points out, some works use target labels for hyperparameter tuning, which can be interpreted as an upper bound on how well the method could perform. We stress that unlike these studies, we used reverse validation for parameter tuning, which is much more similar to real life scenario. This makes our result even more significant.

\subsection{Image classification}
In the task of image classification we used the digit recognition datasets and DomainNet dataset \cite{peng2019moment}.

\textbf{Experimental setup:}
For the image classification experiments, we trained a ResNet-152 with SGD optimizer with learning rate of 0.001 and a moment of 0.9. We tried a few hyperparameters configuration ($\lambda=0.25,0.5$ and $C_{th}=0.25,0.5,0.95$) and used the average accuracy of the student on the source domain as a metric to choose between them.   

\subsection*{Digit recognition}
\label{sec:digits-recognition}
\textbf{The data:}
The digits dataset is a union of several datasets: (1) MNIST: low resolution black and white images of hand written digits. (2) MNIST-M: consists of MNIST digits blended with random color patches. (3) SVHN: contains low resolution images of digits from google street view home number.  (4) SynthDigits: synthetic SVHN-like dataset digits. Those datasets are union to one dataset were each dataset is consider as a different domain. 

\textbf{Compared approaches:}
We compare our results with the current state-of-the-art results as reported in \cite{wen2019domain-agg}.

\begin{table}[!h]
    \centering
    \setlength{\tabcolsep}{3pt}
    \scalebox{0.90}{
    \begin{tabular}{ |l|c|c|c|c||c| } 
    \quad Method & MNIST & MNISTM & SVHN & SYN. & Avg.\\ 
    \hline
    DANN (Ganin 2016) & 96.4 & 60.1 & 70.2 & 83.8 & 77.6\\
    $M^3SDA$ (Peng 2019) & 97.0 & 65.0 & 71.7 & 80.1 & 78.4\\
    MDAN (Zhao 2018) & 97.1 & 64.1 & 77.7 & 85.5 & 81.1\\ 
    MDMN (Li 2019) & 97.2 & 64.3 & 76.4 & 85.8 & 80.9\\
    DARN (Wen 2019) & 98.1 & 67.1 & 81.6 & 86.8 & 83.4\\
    MUST (ours) & \textbf{98.9} & \textbf{83.8} & \textbf{86.0} & \textbf{96.1} & \textbf{91.2} \\
    \hline
    \hline
    Labeled target & 99.0 & 94.7 & 87.6 & 97.0 & 94.6\\
     Error reduction (vs Tgt) & 77\% & 60\%  & 73\% & 91\% & 76\%\\
    \hline
    \end{tabular}
    }
    \caption{Digit recognition accuracy}
    \label{tab:digits}
\end{table}

\textbf{Results:}
As summarized in Table \ref{tab:digits}, MUST outperforms current state-of-the-art MSDA methods by significant margin. The largest improvement is the adaptation to MNIST-M dataset and SYNTH dataset. This make sense due to the fact that MUST focus on common features between the sources and the target. MNIST-M is based on MNIST and SYNTH is based on SVHN, so those datasets have more common features than typical datasets, which leads to the big improvements.

\subsection*{DomainNet}
\label{sec:domainnet}
We next evaluate MUST in a problem of adaptation for visual object recognition.  

\textbf{The data:}
DomainNet \cite{peng2019moment} is a recent challenging dataset designed to evaluate multi-source domain adaptation. It is by far the largest UDA dataset, containing six aligned domains (clipart, infograph, painting, quickdraw, Real and Sketch) and about 6 million images distributed among 345 categories. This dataset is far more challenging than previous digit-based datasets. As shown in \cite{peng2019moment}, even state-of-the-art methods fail to adapt well between its domains.

\textbf{Compared approaches:}
We compared our results to a set of methods as reported by \cite{peng2019moment} in Table \ref{tab:domainnet}.
\cite{peng2019moment} use SSDA methods as a baseline to MSDA in two setups. First, \textit{single best}, where the model is trained on each of the source domains and evaluate on the target domain. The best test score is chosen. Second, a setup called \textit{source combine}, where all samples from the source domains are aggregated as if coming from a single source domain. The model is trained on the aggregated dataset and evaluated on the target domain. 

We compared MUST to several SSDA approaches:
\textbf{(0) No adaptation} Using only source domain samples. 
\textbf{(1) DAN} \cite{long2015learning-DAN} applied MMD to layers embedded in a reproducing kernel Hilbert space, matching higher order statistics of the two distributions.
\textbf{(2) RTN} \cite{long2016unsupervised-RTN} uses residual layers to bridge over components that do not transfer well between domains. 
\textbf{(3) JAN} \cite{long2017deep-JAN} aligns the joint distributions of multiple layers across domains based on a joint maximum mean discrepancy.
\textbf{(4) DANN} \cite{ganin2016domain} uses adversarial loss to create a representation that a domain classifier is unable to classify from which domain the sample originated.
\textbf{(5) ADDA} \cite{tzeng2017adversarial-ADDA} combines discriminative modeling and a GAN loss.
\textbf{(6) SE} \cite{french2017self} ensemble over time between student and teacher models. 
\textbf{(7) MCD} \cite{saito2018maximum-MCD} finding two classifiers that maximize the discrepancy on the target sample and then generate features that minimize this discrepancy.

In addition to SSDA baselines, there are MSDA baselines: 
\textbf{(8) DCTN} \cite{xu2018deep-cocktail} used adversarial learning to minimize the discrepancy between the target and each source domain.
\textbf{(9) $M^3SDA-\beta$} \cite{peng2019moment} minimizes the first order moment-related distance between all source and target domains.

\textbf{Results:}
As summarized in Table \ref{tab:domainnet}. MUST outperforms current state-of-the-art MSDA methods in 4 out of 6 tasks. 
As was shown by \cite{peng2019moment}, MSDA methods are consistently better than SSDA baselines. The only exception was Quickdraw. No adaptation baselines score 11.8\% and 13.3\% while some single-source DA baselines improve this score up to 16.2\%. Surprisingly the MSDA methods achieve lower score than no adaptation (7.2\% for DCTN and 6.3\% for $M^3SDA$. This indicates that a negative transfer occurred when applying MSDA methods. MUST design to reduce negative transfer and compare to other MSDA methods we almost doubled the accuracy in the Quickdraw task up to 12.2\%, getting much closer to the no adaptation baselines. This is still blow some SSDA baseline. That demonstrates the difficulty of avoiding negative transfer in the multi-source setting and shows that there is room for further improvement.

\begin{table}[h]
    \small
    \centering
    \setlength{\tabcolsep}{3pt}
    \scalebox{0.85}{
    \begin{tabular}{|l|l|c|c|c|c|c|c||c|}
        & Models & {Clip} & {Info} & {Paint} & {Quick} & {Real} & {Skt} & {Avg} \\\hline
    \multirow{8}{*}{\begin{tabular}[c]{@{}c@{}}Single \\ Best \end{tabular}}    
        & Source Only & 39.6 & 8.2  & 33.9 & 11.8 & 41.6 & 23.1 & 26.4 \\
        & DAN (Long 2015) & 39.1 & 11.4 & 33.3 & \textbf{16.2} & 42.1 & 29.7 & 28.6 \\
        & RTN (Long 2016) & 35.3 & 10.7 & 31.7 & 13.1 & 46.0 & 26.5 & 26.3 \\
        & JAN (Long 2017) & 35.3 & 9.1  & 32.5 & 14.3 & 43.1 & 25.7 & 26.7 \\
        & DANN \tiny{(Ganin 2016)}& 37.9 & 11.4 & 33.9 & 13.7 & 41.5 & 28.6 & 27.8 \\
        & ADDA \tiny{(Tzeng 2017)}& 39.5 & 14.5 & 29.1 & 14.9 & 41.9 & 37.0 & 28.4 \\
        & SE (French 2017) & 31.7 & 12.9 & 19.9 & 7.7  & 33.4 & 26.3 & 22.0 \\
        & MCD (Saito 2018) & 42.6 & 19.6 & 42.6 & 3.8  & 50.5 & 33.8 & 32.2 \\ 
        \hline
    \multirow{8}{*}{\begin{tabular}[c]{@{}c@{}}Source \\ Combine\end{tabular}} 
        & Source Only & 47.6 & 13.0 & 38.1 & 13.3 & 51.9 & 33.7 & 32.9 \\
        & DAN (Long 2015) & 45.4 & 12.8 & 36.2 & 15.3 & 48.6 & 34.0 & 32.1 \\
        & RTN (Long 2016) & 44.2 & 12.6 & 35.3 & 14.6 & 48.4 & 31.7 & 31.1 \\
        & JAN (Long 2017) & 49.0 & 11.1 & 35.4 & 12.1 & 45.8 & 32.3 & 29.6 \\
        & DANN (Ganin 2016) & 45.5 & 13.1 & 37.0 & 13.2 & 48.9 & 31.8 & 32.6 \\
        & ADDA (Tzeng 2017) & 47.5 & 11.4 & 36.7 & 14.7 & 49.1 & 33.5 & 32.2 \\
        & SE (French 2017) & 24.7 & 3.9  & 12.7 & 7.1  & 22.8 & 9.1  & 16.1 \\
        & MCD (Saito 2018) & 54.3 & 22.1 & 45.7 & 7.6  & 58.4 & 43.5 & 38.5 \\ \hline
    \multirow{3}{*}{\begin{tabular}[c]{@{}c@{}}Multi- \\ Source\end{tabular}} 
        & DCTN (Xu 2018) & 48.6 & 23.5 & 48.8 & 7.2  & 53.5 & 47.3 & 38.2 \\
        & M3SDA (Peng 2019)& 58.6 & \textbf{26.0} & \textbf{52.3} & 6.3 & 62.7 & 49.5 & 42.5 \\
        & MUST (ours) & \textbf{60.8} & 20.5 & 48.2 & 12.2 & \textbf{65.1} & \textbf{49.8} & \textbf{42.8}\\ 
        \hline
        \hline
        & Target  & 71.6 & 36.7  & 68.6 & 69.6  & 81.8 & 65.8  & 65.7 \\
        \hline
    \end{tabular}
    }
    \caption{Image classification accuracy on DomainNet}
    \label{tab:domainnet}
\end{table} 

\section{Analysis}
\subsection{Analysis of optimization dynamics}
A closer look into the training process can help us understand the dynamic interplay of the two networks. Figure \ref{fig:learning_graphs} traces the target accuracy of the student and teacher together with $L_{teacher-clf}$ and $L_{student}$ during training. In addition, we plot the number of samples that cross the confidence threshold and the reverse validation score.
Interestingly, learning follows through four phases.  
\newline\textbf{(1) Teacher learns}: In the first 1000 iterations, the teacher learns to perform the classification task on the source domains, its accuracy grows, but no prediction on the target domain is above the confidence threshold, so the student does not get any label data to train on.
\newline\textbf{(2) Sync}: Between iterations 1000 and 2000, the teacher confidence in the target domain grows and the student starts getting its first labels to train on. Surprisingly, there is no change in the student target accuracy, even though the teacher gives the student target soft labels to train on, and the accuracy of the teacher is around its saturation value. This phenomenon can be understood by looking at the losses. 
The teacher gives the student soft labels to learn from, but they are inconsistent between iterations, so the student can not fit them well. The student regularization forces the teacher to predict more consistent predictions. This extra regularization causes the teacher accuracy to decrease a bit.
\newline\textbf{(3) Student learns}: Around iterations 2000-2500, the teacher and the student start to improve together. The student accuracy improves fast on the target data and also fits better with the teacher predictions. The reverse validation loss decline indicates that the student focuses on features that are relevant to both the source and target data.
\newline\textbf{(4) Saturation}: After iteration 2500, both student and teacher reach saturation. The student achieves higher target accuracy than the teacher, which indicates that it reduces the negative transfer effect.

\begin{figure}[!h]
    \includegraphics[width=0.5\textwidth]{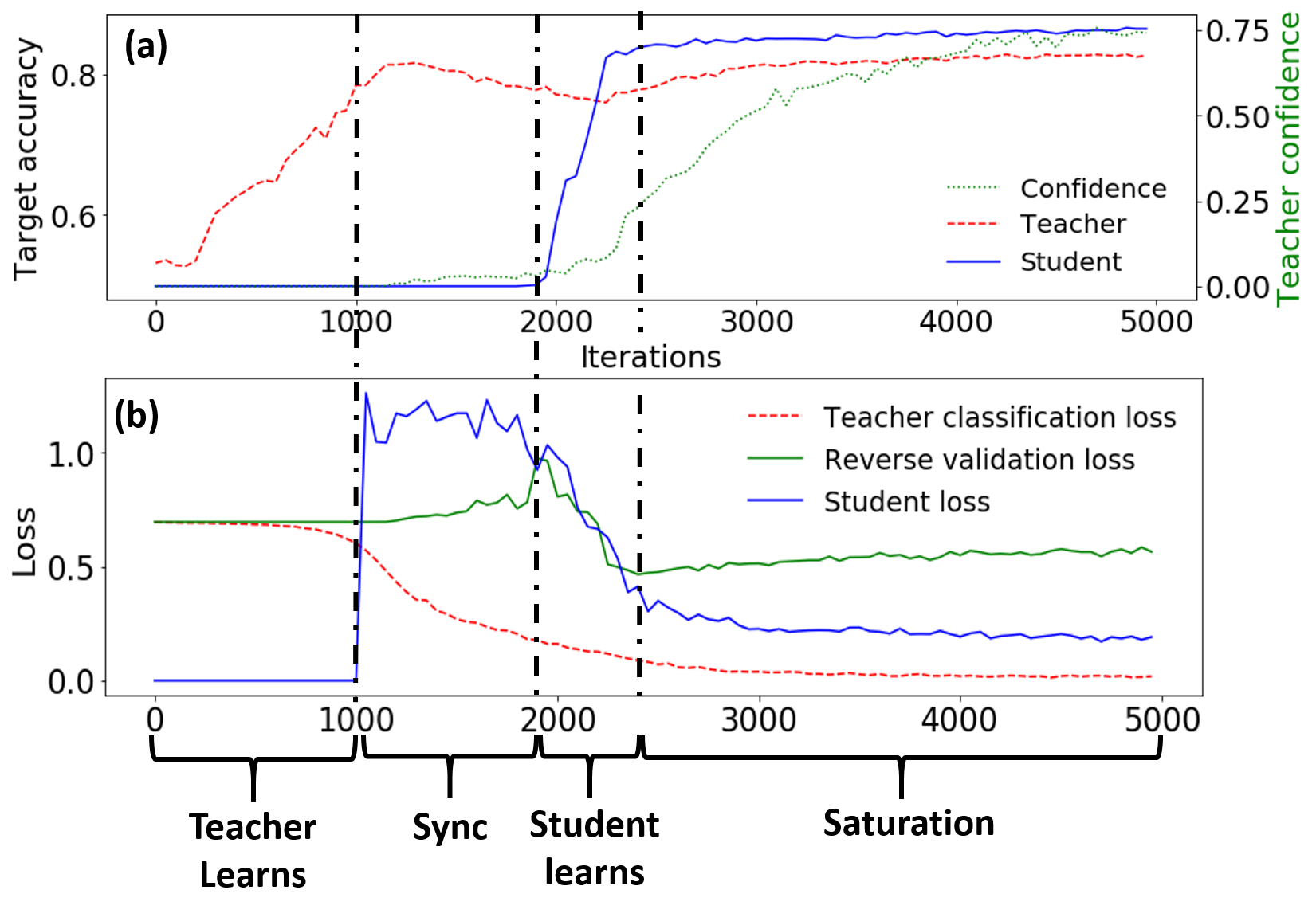}
    \caption{\textbf{The dynamics of teacher and student joint-learning}. (a) Target accuracy of teacher (red) and student (blue) during training and the percent of samples that passed the confidence threshold (green). (b) Loss functions values during training $L_{teacher-clf}$ (red), $L_{student}$ (blue) and reverse validation loss (green).} 
    \label{fig:learning_graphs}
\end{figure}

\subsection{Qualitative analysis}
To demonstrate the effectiveness of MUST regularization, we conducted the following experiment: We trained a teacher network on source data with and without the student loss regularization. Then, we tracked the teacher prediction for each target sample along the training process. We calculated the standard deviation per sample using a sliding-window over training steps, and used the average of all the samples to measure the model consistency. Figure \ref{fig:noise_reduction} shows that for a non-regularized teacher, the average standard deviation rises a few times during training which implies noisy predictions on the target. MUST regularization reduces average standard deviation along training and ensures predictions consistency.

\begin{figure}[!h]
    \begin{center}
        \includegraphics[width=0.85\linewidth]{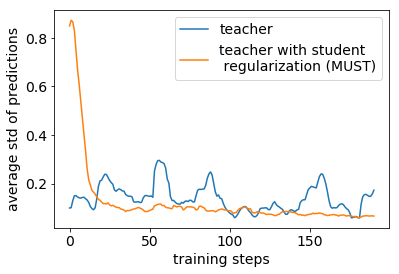}
    \end{center}
    \caption{\textbf{Consistency of teacher predictions on target}. Shown is the standard deviation of predictions during training, with (orange) and without (blue) student regularization.}
    \label{fig:noise_reduction}
\end{figure}
To get a better understanding of the classifier properties learned using MUST, we used an illustrative example. We generated 2D data from 2 classes and trained a 2-layer fully-connected neural network to perform binary classification. We train and visualized the decision boundary of 20 different initializations of the source only and MUST models.

Figure \ref{fig:avoid-clusters} shows the MUST teacher moves its decision boundary away from the target samples and avoid cutting the target clusters as lemma \ref{lemma:margin} predicted. This is an important feature because if the target data is clustered, we expect the decision boundary to cross between the clusters (regions with low density) and not cut in the middle of a cluster (high density regions). This property is known as the \textit{clustering assumption} and it is widely use in semi-supervised methods.

\begin{figure}[!h]
    \begin{center}
        \includegraphics[width=1\linewidth]{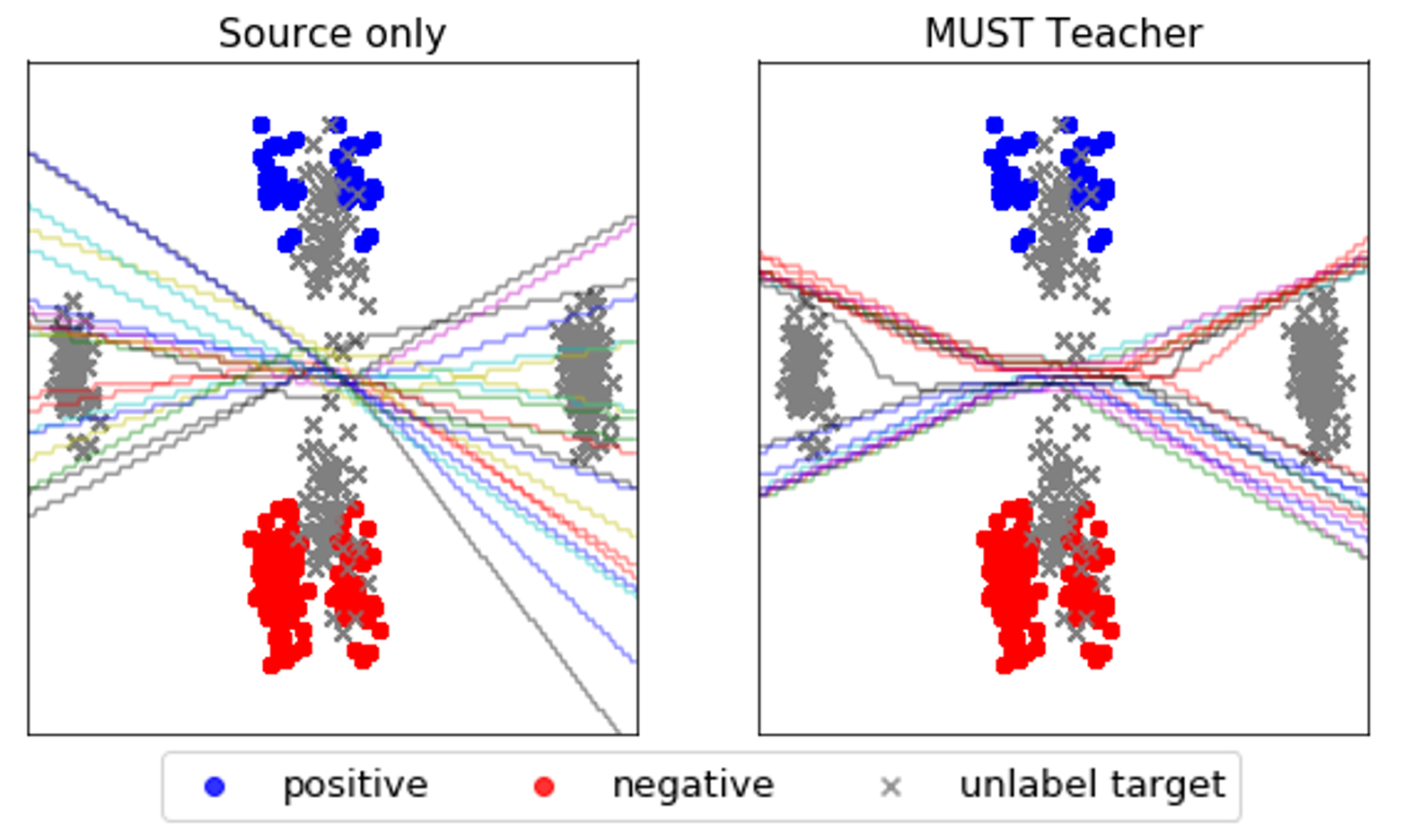}
    \end{center}
    \caption{Decision boundaries of source-only models and MUST for 20 initializations, each with a different color. Blue: positive source samples. Red: negative source samples. Gray: unlabeled target samples. The source-only model classifies perfectly the source data but ignores the target data. The MUST teacher learns to classify the source data and at the same time avoids dense areas of the target distribution.}
    \label{fig:avoid-clusters}
\end{figure}

To assess if this effect occurs in realistic datasets, we measured the sample density as a function of the distance from the decision boundary. Specifically, we perturbed each sample by epsilon in the  direction of an adversarial perturbation and counted the how many samples changed their labels. Figure \ref{fig:adv} shows that for the \textit{source-only} model, many more samples are close to the decision boundary, compared with the MUST model. This shows that MUST decision boundaries have larger margin for target samples. 

\begin{figure}[!h]
    \begin{center}
        \includegraphics[width=0.85\linewidth]{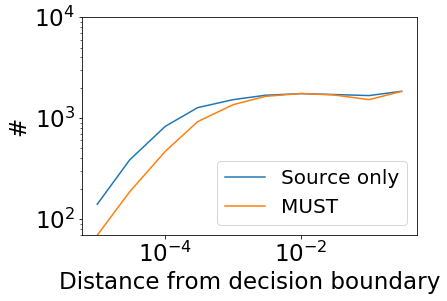}
    \end{center}
    \caption{The number of target samples within a specific distance from decision boundary. For each target sample, we move an epsilon step in direction of adversarial perturbation and count the number of samples that change their labels.}
    \label{fig:adv}
\end{figure}

\subsection{Ablation}
We quantify the contribution of various components of MUST. (1) Only-BN: We trained a teacher network without the student loss regularization using per-domain batch norm layers. (2) Teacher: use the teacher as the final classifier. (3) MUST: use the student as the final classifier. 
Table \ref{tab:amazon-ablation} summarizes the results for the sentiment analysis experiment, Table \ref{tab:digits-ablation} for digit recognition and Table \ref{tab:domainnet-ablation} for DomainNet.   

\begin{table}[h!]
    \centering
    \scalebox{0.95}{
    \begin{tabular}{ |l|c|c|c|c||c| } 
        Method & Books & DVD & Elect & Kitn & Avg.\\ 
        \hline
        Only-BN & 75.5 & 78.2 & 82.3 & 84.2 & 80.1 \\ 
        Teacher & 76.0 & 79.1 & 82.6 & 84.7 & 80.6\\ 
        MUST & \textbf{80.5} & \textbf{81.9} & \textbf{86.3} & \textbf{87.9} & \textbf{84.2}\\ 
        \hline
    \end{tabular}
    }
    \caption{MUST ablation on Amazon product reviews}
    \label{tab:amazon-ablation}
\end{table}

\begin{table}[h!]
    \centering
    
    \scalebox{0.95}{
    \begin{tabular}{ |l|c|c|c|c||c| } 
        Method & MNIST & M-M & SVHN & SYN & Avg. \\ 
        \hline
        Only-BN & 98.2 & 72.6 & 76.3 & 94.5 & 85.4 \\ 
        Teacher & \textbf{98.9} & 83.2 & \textbf{86.0} & 95.9 & 91.0 \\ 
        MUST & \textbf{98.9} & \textbf{83.8} & \textbf{86.0} & \textbf{96.1} & \textbf{91.2} \\ 
        \hline
    \end{tabular}
    }
    \caption{MUST ablation on digit recognition}
    \label{tab:digits-ablation}
\end{table}

\begin{table}[h!]
    \centering
    \setlength{\tabcolsep}{2pt}
    {\small 
    \sc
    \scalebox{0.99}{
    \begin{tabular}{ |l|c|c|c|c|c|c||c| } 
        & Clip-  & Info-  & Paint  & Quick-  & Real & Sketch  & Avg.  \\
        Models & art & graph & & draw &  &  &   \\\hline
        Only-BN & 47.9 & 13.5 & 39.9 & 9.3 & 55.9 & 36.6 & 33.8 \\ 
        Teacher & 58.9 & 8.1 & 47.9 & 3.3 & \textbf{65.2} & 49.3 & 38.8 \\
        MUST & \textbf{60.8} & \textbf{20.5} & \textbf{48.2} & \textbf{12.2} & 65.1 & \textbf{49.8} & \textbf{42.8}\\ 
        \hline
    \end{tabular}
    }}
    \caption{MUST ablation on DomainNet}
    \label{tab:domainnet-ablation}
\end{table}

In most cases, the teacher improves the Only-BN score, demonstrating the influence of the student feedback to the teacher. In addition, the student is consistently better than the teacher, indicating the reduction of negative transfer. This effect is particularly noticeable on Quickdraw where the negative transfer for all MSDA methods were shown in section \ref{sec:domainnet}.

\section{Conclusion}
This paper describes a new training procedure for MSDA. We empirically show that MUST reduced the error on target data in three real life datasets, achieving state-of-the-art results on digit recognition, Amazon product review and DomainNet datasets.

\section*{Acknowledgement}
Research was supported by the Israel Science Foundation (ISF grant 737/2018)  and by the Israeli innovation authority, the AVATAR consortium.

\small
\bibliography{cite.bib}

\end{document}